\definecolor{Gray}{gray}{0.8}
\definecolor{LightGray}{gray}{0.95}
\theoremstyle{plain}
\newtheorem{theorem}{Theorem}
\newtheorem{lemma}{Lemma}[theorem]
\newtheorem{corollary}[theorem]{Corollary}
\newtheorem{definition}{Definition}
\newtheorem{remark}{Remark}
\newcommand{\safe}{{\rm safe}}
\newcommand{\E}{\mathbb{E}}
\newcommand{\R}{\mathbb{R}}
\newcommand{\ep}{\varepsilon}
\newcommand{\statespace}{\mathcal{X}}
\newcommand{\actionspace}{\mathcal{U}}
\newcommand{\targetspace}{\mathcal{Y}}
\newcommand{\state}{x}
\newcommand{\action}{u}
\newcommand{\target}{y}
\newcommand{\param}{\lambda}
\newcommand{\thresh}{\ep}
\newcommand{\dfam}{\mathcal{D}}
\newcommand{\dparam}{D^{\param}}
\newcommand{\loss}{\mathcal{L}}
\newcommand{\setT}{[T]}
\newcommand{\paramlow}{\param^{\rm safe}}
\newcommand{\cmark}{{\color{green}\ding{51}}}
\newcommand{\xmark}{{\color{red}\ding{55}}}
\newcommand{\reward}{J}
\def\BibTeX{{\rm B\kern-.05em{\sc i\kern-.025em b}\kern-.08em
T\kern-.1667em\lower.7ex\hbox{E}\kern-.125emX}}
\newcommand{\para}[1]{\smallskip \noindent \textbf{{#1}.}}
\newcommand\blfootnote[1]{%
  \begingroup
  \renewcommand\thefootnote{}\footnote{#1}%
  \addtocounter{footnote}{-1}%
  \endgroup
}
\begin{document}

\title{\huge Conformal Decision Theory: \\ Safe Autonomous Decisions from Imperfect Predictions
}

\author{Jordan Lekeufack$^{1,*}$ 
Anastasios N. Angelopoulos$^{2,*}$  
Andrea Bajcsy$^{3,*}$\thanks{$^1$Department of Statistics, UC Berkeley.
$^2$Department of Electrical Engineering and Computer Science, UC Berkeley. Emails: \texttt{\{jordan.lekeufack, angelopoulos, michael\_jordan, malik\}@berkeley.edu} $^3$Robotics Institute, Carnegie Mellon University. Email: \texttt{\{abajcsy\}@cmu.edu} $^*$ Equal contribution. $^{**}$ Equal senior authorship. }  %
Michael I. Jordan$^{1,2,**}$
Jitendra Malik$^{2,**}$
}

\makeatletter
\let\@oldmaketitle\@maketitle
\renewcommand{\@maketitle}{\@oldmaketitle
\centering
\includegraphics[width=\textwidth]{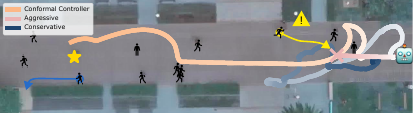}
\captionof{figure}{\fontsize{9}{10.5}\selectfont Robot planner using a conformal controller on the Stanford Drone Dataset \cite{robicquet2016learning}. The future trajectories of humans are predicted online by a machine learning algorithm (not visualized). The robot planner finds an optimal spline through the scene and is penalized for being close to humans. 
This penalty is proportional to a conformal control variable, $\param_t$, which is adjusted online by the conformal controller so the average distance from a human is no less than two meters. 
The orange, red, and blue curves are the robot trajectory with different planners: the conformal controller, an aggressive planner with $\param=0$ (i.e., no reward for avoiding humans), and a conservative planner with a large negative value of $\param$ (i.e., a large reward for avoiding humans). 
The darkness of the lines indicates the passage of time. 
Illustrative pedestrian trajectories are plotted as arrows; only the yellow pedestrians affect the spline planner. Details in Section~\ref{subsec:sdd} and videos on \href{https://conformal-decision.github.io}{project website}$^\dag$ .}
\label{fig:front_fig}
\vspace{-0.2in}
\bigskip}
\makeatother
\maketitle

\thispagestyle{plain}
\pagestyle{plain}

\blfootnote{$^\dag$ https://conformal-decision.github.io}

\begin{abstract}
We introduce \emph{Conformal Decision Theory}, a framework for producing safe autonomous decisions despite imperfect machine learning predictions.
Examples of such decisions are ubiquitous, from robot planning algorithms that rely on pedestrian predictions, to calibrating autonomous manufacturing to exhibit high throughput and low error, to the choice of trusting a nominal policy versus switching to a safe backup policy at run-time. 
The decisions produced by our algorithms are safe in the sense that they come with provable statistical guarantees of having low risk without any assumptions on the world model whatsoever; the observations need not be I.I.D. and can even be adversarial.
The theory extends results from conformal prediction to calibrate decisions directly, without requiring the construction of prediction sets.
Experiments demonstrate the utility of our approach in robot motion planning around humans, automated stock trading, and robot manufacturing.

\end{abstract}

\section{Introduction}
\label{sec:intro}

Autonomous systems increasingly rely on complex learned models to supply predictions that are the basis for decision-making. Self-driving cars rely on deep neural networks \cite{alahi2016social, jain2016structural, vemula2018social, salzmann2020trajectron++} to plan paths around nearby pedestrians, robotic manipulators leverage learned grasp models \cite{mahler2017dex} to plan high-throughput pick-and-place maneuvers in factories, and AI-enabled trading agents optimize the financial future of investors \cite{yin2022graph}. There is a conceptual gap between prediction and decision-making, and it remains a challenge to ensure that
systems make \emph{good decisions} despite \emph{imperfect predictions}. 

One increasingly popular strategy is to quantify the uncertainty in the predictions independently of their downstream effect on the decision via conformal prediction (CP)~\cite{vovk2005algorithmic, vovk2018conformal, gibbs_adaptive_2021, zaffran2022adaptive, angelopoulos2021gentle, feldman2023achieving}.
This approach has become popular because, when used to provide simultaneous prediction sets on all outcomes, conformal prediction provides statistical guarantees of safe autonomous behavior without any assumption on the underlying distribution or model.
This application of CP has shown impact in robot navigation \cite{chen2021reactive, lindemann2023safe, dixit2023adaptive, muthali2023multi}, early warning systems \cite{luo2022sample}, out-of-distribution detection \cite{cai2020real, sinha2023closing}, probabilistic pose estimation \cite{yang2023object}, and for large language models  \cite{ren2023robots}.
However, the requirement of simultaneous coverage is challenging to satisfy and for many decision systems is excessive.
What if we could provide statistical guarantees, as in CP, \textit{directly} on our decisions, bypassing the need to construct prediction sets?


This work presents Conformal Decision Theory, a theoretical and algorithmic framework that unifies predictive uncertainty and safe decision-making. Our key idea is 
\begin{quote}
     \textit{instead of calibrating  \textbf{prediction sets} for coverage, we directly calibrate \textbf{decisions} for low risk. }
\end{quote}
Our main algorithmic innovation is a class of algorithms called \emph{conformal controllers}.
A conformal controller starts with a conformal control variable, $\lambda_t$, which determines the decision-maker's conservatism or aggressiveness.
Then, it dynamically adjusts $\lambda_t$ to balance risk and performance in such a way that guarantees a low risk.
The main practical benefit of this approach is its \emph{emergent ability to ignore irrelevant uncertainty}, only accounting for that which \emph{affects decisions}. 
This can be much less conservative than the prediction-set strategy.
For example, in Figure~\ref{fig:front_fig}, the planner only considers the humans that pose a collision risk. 

The contributions of this paper are threefold:
\begin{itemize}
    \item We introduce Conformal Decision Theory, the idea of directly calibrating decisions with conformal controllers. This extends the line of work in online adversarial conformal prediction~\cite{gibbs_adaptive_2021, bastani2022practical, feldman2023achieving, angelopoulos2023conformal} to the decision-making setting. 
    \item We prove finite-time risk bounds for conformal controllers. Even when applied to prediction sets, these results are stronger than any previously known results for online adversarial conformal prediction. 
    \item We show the utility of the framework in three simulations where Conformal Decision Theory is applied to robot navigation: the Stanford Drone Dataset \cite{robicquet2016learning}, a stock trading simulation, and a robot manufacturing example. 
\end{itemize}
The main potential impact of this work is to broaden the scope of conformal prediction.
Our methods are more appropriate for disciplines that focus on decision-making, such as control theory, reinforcement learning, and logistics.
In these disciplines, algorithms are ultimately evaluated by the decisions, not the predictions, that they make.
Furthermore, there are many settings where it does not make sense to construct prediction sets, and our technique can provide a  distribution-free outlook for such problems (see, e.g., Section~\ref{subsec:factory}).

\section{Related Work}
\label{sec:related_work}

\para{Decision-Making Under Predictive Uncertainty} 
Within the machine learning and statistics community, uncertainty quantification of prediction models has been studied widely, from conformal prediction to Bayesian neural networks to ensembles \cite{gal2016dropout,lampinen2001bayesian, goan2020bayesian, lakshminarayanan2017simple, koenker1978regression}. 
Instead of focusing on prediction calibration alone, the controls and optimization community have coupled prediction uncertainty with safe (i.e., risk bounded) decision-making via
chance-constrained optimal control \cite{du2011robot, bujarbaruah2020adaptive} and scenario optimization \cite{campi2009scenario, de2023scenario}. The former typically constructs prediction sets that are used as constraints while the latter safeguards against samples drawn from the prediction model. 
Instead of directly calibrating the output of upstream prediction modules or solving decision-making problems under probabilistic constraints, this work presents a theoretical and algorithmic approach to tuning the robot's decision risk directly as a function of historical decision-making performance. 

\para{Online Learning \& Nonstochastic Control}
The method herein is reminiscent of online learning, and specifically the online gradient descent (OGD) update of~\cite{zinkevich2003online}.
The connection is most apparent when examining the forthcoming Equation~\eqref{eq:CC} with $\ell_t = \mathbf{1} \{y_t \in \mathcal{C}(x_t)\}$; 
this recovers the ACI algorithm of~\cite{gibbs_adaptive_2021}, which is OGD on the quantile loss~\cite{koenker1978regression}.
However, the update in~\eqref{eq:CC} is substantially more general because it incorporates arbitrary decision rules, and reframing it as OGD on an analytic loss function is generally impossible.
Furthermore, the guarantees in~\cite{gibbs_adaptive_2021} are not to our knowledge recoverable by existing regret analyses from online convex optimization and nonstochastic control, e.g., ~\cite{hazan2016introduction,bubeck2011introduction, hazan2022introduction}.
However, the guarantees do share a retrospective flavor, in that, like regret analyses, they provide guarantees on average over the observed history.
\section{Conformal Decision Theory}
\label{sec:theory}

Conformal Decision Theory (CDT) is an approach for calibrating an agent’s decisions to achieve statistical guarantees for the realized average loss of those decisions.
Consider a decision-making agent whose input space is $\statespace$ and action space is $\actionspace$. 
In our running example of robot navigation, $\state_t \in \statespace$ captures the current state of the robot, the current scene information (e.g., environment geometry), and the agent information (e.g., pedestrian predictions) while $\action_t \in \actionspace$ is the action that the ego vehicle plans at the current time $t$. 
At time $t$, the agent has access to a family of \emph{decision functions}
\begin{equation}
\label{eq:decision_fam}
\dfam_t := \left\{\dparam_t : \statespace \rightarrow \actionspace, \param \in \mathbb{R}\right\},
\end{equation}
parameterized by $\param$, which we call a \emph{conformal control variable}.
One should think of $\param$ as indexing the decisions from least to most conservative.
In Figure~\ref{fig:front_fig}, $\dfam_t$ is the set of dynamically feasible splines at time $t$, $\param$ is the coefficient of the reward term for avoiding humans, and $\dparam_t$ is the spline maximizing the total reward given $\param$.

Assessing the quality of an agent's decision depends on a space of \textit{targets} $\targetspace$. Importantly, the realizations of these targets are \emph{unknown} at the time of the decision; the agent only observes them at deployment time, after decisions are made, and in an online fashion. For example, the robot in Figure~\ref{fig:front_fig} does not know the true future state of nearby pedestrians; at any current time $t$, it only knows the (potentially erroneous) pedestrian predictions. In this example, $\targetspace$ is the space of pedestrian states (e.g., 2D positions) and $\target_t \in \targetspace$ is the \textit{true} state that the pedestrian moves to at time $t$.

Mathematically, the quality of the decision-making is quantified by a \emph{loss function} $\loss: \actionspace \times \targetspace \rightarrow [0, 1]$.\footnote{The framework works for any bounded loss, but we assume the loss to be in $[0,1]$ for simplicity.}
Often, the loss is more likely to be large when aggressive decisions are taken---i.e., when $\param$ is large.
Aggressive decisions can be unsafe, but taking $\param$ too small yields conservative and under-performing decisions. 

We seek an algorithm for adapting $\param_t$ (and thus the corresponding decision $\dparam_t$) at each time step such that the average loss is controlled in hindsight for \textit{any} realization of an input-target sequence $\{(x_t, y_t)\}^T_{t=1}$.
This is commonly known as the \emph{adversarial sequence model}~\cite{dawid1982well, gibbs_adaptive_2021}.
Here, our goal is to set $\param_{1:T}$ to achieve a \emph{long-term risk bound}:
\begin{equation}
    \text{find } \param_{1:T} \text{ s.t. } \hat R_T(\dfam_{1:T}, \param_{1:T}) \leq \thresh + \frac{C \cdot h(T)}{T},
    \label{eq:risk_bound_cdt}
\end{equation}
where $\thresh$ is a pre-defined risk level in $[0, 1]$, $C$ is a (small) constant, $h(T)$ is any sublinear function; i.e., one where $h(T)/T \to 0$ as $T\to \infty$, and 
\begin{equation}
    \hat R_T(\dfam_{1:T}, \param_{1:T}) := \frac{1}{T} \sum_{t=1}^T \loss(D^{\param_t}_t(\state_t), \target_t) \quad {\rm and} \quad \hat R_0 = 0.
\end{equation}

We will omit $\dfam_{1:T}$ in the notation of risk when the sequence of family of decision functions is clear from the context


\section{Theory \& Conformal Controller Algorithm}



In this section, we prove the core theoretical results behind Conformal Decision Theory. Specifically, we show that any sequence of families of decision functions $\dfam_{1:T}$ that are \textit{eventually safe} can be calibrated online to achieve bounded long-term risk. 
We then introduce an example of conformal controller which solves Equation~\eqref{eq:risk_bound_cdt} under the assumption of eventual safety. 

\begin{definition}[Eventually Safe] \label{def:eventually-safe}
    In the setting above, we say that $\dfam_{1:T}$ is \textit{eventually safe} if $\exists \; \thresh^\safe \in [0, 1], \paramlow \in \R$ and a time horizon $K>0$ such that uniformly over all sequences $\lambda_{1:K}$ and $\{(\state_1,\target_1), \dots , (\state_k,\target_k)\} \in \statespace \times \targetspace$,
    \begin{equation}
        \begin{aligned}
        \big\{\forall k\in [K], \param_k & \leq \paramlow\big\} \\
        & \Longrightarrow \frac{1}{K} \sum_{k=1}^K\loss\left(D^{\param_k}_k(\state_k), \target_k\right) \leq \thresh^\safe.
        \end{aligned}
    \end{equation}
\end{definition}
Intuitively, this condition says that there exists a safe value $\paramlow$ such that if the conformal control variable lands below that value, it will incur a low risk $\thresh^\safe$ after no more than $K$ time steps.
For example, even the most conservative robot planner may not be able to change its trajectory fast enough \emph{in a single timestep}, but it could possibly do so in $K$ time steps. 
For general decision-making, the existence of a safe decision function is not guaranteed, and requires domain-specific knowledge (e.g., when the loss function captures the distance between agents \cite{bansal2017hamiltonjacobi, kim2023datadriven, hsu2023safety}). 
But when the decision is a \textit{prediction set}, a safe decision function is trivial because you can always output the entire space.
Note that the \textit{eventually safe} is a strictly weaker assumption than that used for the proofs in other works, such as~\cite{gibbs_adaptive_2021,bhatnagar2023improved, angelopoulos2023conformal}, which require $K=1$. Moreover,
conformal controllers are simple yet efficient algorithms that solve the Conformal Decision Theory problem stated in Equation~\eqref{eq:risk_bound_cdt}.
An example is below.

\begin{theorem}[Conformal Controller]
\label{prop:CC}
Consider the following update rule for $\lambda_{1:T}$:
\begin{equation}
    \param_{t+1} = \param_{t} + \eta \left( \thresh - \ell_t \right), \forall t \in [T]
    \label{eq:CC}
\end{equation}
where $\eta > 0$ and $\ell_t := \loss(D_t^{\param_t}(x_t), y_t)$. 

If  $\param_1 \geq \paramlow - \eta$ and $\dfam_{1:T}$ satisfies Definition~\ref{def:eventually-safe} for a given $K\geq1$ and $\thresh^\safe \leq \thresh$, then for any realization of the data, the empirical risk is bounded:
\begin{equation}
     \hat R_t(\param_{1:t}) \leq  \thresh +  \frac{(\param_1 - \paramlow)/\eta + K}{t},
\end{equation}
for all $t \in [K,..., T]$.
\end{theorem}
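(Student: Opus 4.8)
The plan is to first convert the update rule~\eqref{eq:CC} into an exact identity for the empirical risk, and then reduce the whole statement to a single lower bound on the iterate $\param_t$. Telescoping~\eqref{eq:CC} over $s = 1, \dots, t$ gives $\param_{t+1} = \param_1 + \eta t \thresh - \eta \sum_{s=1}^t \ell_s$, equivalently
\begin{equation}
\hat R_t(\param_{1:t}) \;=\; \frac{1}{t}\sum_{s=1}^t \ell_s \;=\; \thresh + \frac{\param_1 - \param_{t+1}}{\eta\,t}.
\end{equation}
Plugging any lower bound of the form $\param_{t+1} \ge \paramlow - \eta K$ into this identity produces exactly the asserted bound $\hat R_t(\param_{1:t}) \le \thresh + \big((\param_1 - \paramlow)/\eta + K\big)/t$. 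So the task reduces to establishing the iterate bound $\param_t \ge \paramlow - \eta K$ for every $t$.

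For this I would combine a step-size estimate with an induction on $t$. Since $\ell_s, \thresh \in [0,1]$, every increment satisfies $|\param_{s+1} - \param_s| = \eta|\thresh - \ell_s| \le \eta$; together with $\param_1 \ge \paramlow - \eta$ this gives $\param_t \ge \param_1 - (t-1)\eta \ge \paramlow - t\eta \ge \paramlow - K\eta$ whenever $t \le K$, which is the base case. For the inductive step, fix $t > K$ and suppose $\param_s \ge \paramlow - \eta K$ for all $s < t$; I would split on the behavior of $\param$ over the window $\{t-K, \dots, t-1\}$. If $\param_s \le \paramlow$ throughout that window, then Definition~\ref{def:eventually-safe}, applied to these $K$ consecutive steps, yields $\frac{1}{K}\sum_{s=t-K}^{t-1}\ell_s \le \thresh^\safe \le \thresh$, hence $\sum_{s=t-K}^{t-1}(\thresh - \ell_s) \ge 0$, so $\param_t = \param_{t-K} + \eta\sum_{s=t-K}^{t-1}(\thresh - \ell_s) \ge \param_{t-K} \ge \paramlow - \eta K$ by the inductive hypothesis. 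Otherwise some $s_0 \in \{t-K,\dots,t-1\}$ has $\param_{s_0} > \paramlow$, and applying the step-size bound over the at most $K$ steps from $s_0$ to $t$ gives $\param_t \ge \param_{s_0} - (t-s_0)\eta > \paramlow - K\eta$. In both cases the invariant is preserved, completing the induction.

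The step I expect to be the main obstacle is the middle one: invoking \emph{eventual safety on an interior window} of the run rather than on the first $K$ steps. This relies on reading Definition~\ref{def:eventually-safe} as a shift-invariant property of $\dfam_{1:T}$ (so that it applies to the realized $\param$'s and data on \emph{any} block of $K$ consecutive times), and it requires certifying that $\param_s \le \paramlow$ on the \emph{entire} block before the implication can be used --- which is exactly the role of the case split, powered by the step-size bound. The remaining ingredients (the telescoping identity, the bound $|\param_{s+1} - \param_s| \le \eta$, and the induction bookkeeping) are routine. As a side remark, the displayed identity shows the risk bound in fact holds for all $t \in \setT$; the restriction $t \ge K$ in the statement is only because the bound is vacuous below that.
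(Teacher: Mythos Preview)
Your proposal is correct and matches the paper's argument: telescope~\eqref{eq:CC} to the identity $\hat R_t = \thresh + (\param_1 - \param_{t+1})/(\eta t)$, then reduce everything to the iterate bound $\param_t \geq \paramlow - K\eta$ (the paper's Lemma~\ref{lemma:param_bounds}), established via the step-size estimate $|\param_{s+1}-\param_s|\leq\eta$ together with eventual safety on the length-$K$ window preceding $t$. The only difference is cosmetic---the paper organizes the lemma as a proof by contradiction on the first time the bound is violated, whereas you run a forward induction with a case split---but the ingredients and the delicate step you flag (invoking Definition~\ref{def:eventually-safe} on an interior window) are identical in both.
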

The update in~\eqref{eq:CC} resembles ACI~\cite{gibbs_adaptive_2021} and is a hybrid between the RollingRC update~\cite{feldman2023achieving}, and the P-controller update~\cite{angelopoulos2023conformal}.
The difference is that the update is applied to $\param$ and not the conformal quantile or quantile level.

\begin{proof}[Proof of \autoref{prop:CC}]
    By the definition of the update rule,
     \begin{equation}
        \param_{t+1} = \param_1 + \eta \sum\limits_{s=1}^{t} (\thresh - \ell_s).
    \end{equation}
    By isolating $\sum_{s=1}^t \ell_s$ on one side and moving all other terms to the right-hand side, we obtain:
    \begin{align*}
        \hat R_t(\param_{1:t}) = \frac{1}{t} \sum\limits_{s=1}^{t} \ell_s &= \thresh +  \frac{\param_1 - \param_{t+1} }{\eta t}. 
    \end{align*}
    To conclude, we just need to lower bound $\lambda_t$ by a constant w.r.t $t$ which is done in the following Lemma \ref{lemma:param_bounds}.
\end{proof}

\begin{lemma}
    \label{lemma:param_bounds}
    For the sequence in Equation ~\ref{eq:CC}, with $\param_1 \geq \paramlow - \eta$ we have that the parameter $\param_t$ is bounded below by $\param_t \geq \paramlow - K\eta$, for all $t \in [T+1]$. 
\end{lemma}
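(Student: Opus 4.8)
The plan is to prove the bound by strong induction on the time index, using two elementary facts about the recursion~\eqref{eq:CC}. Fact~(i): since $\ell_s \in [0,1]$ and $\thresh \geq 0$, we have $\param_{s+1} - \param_s = \eta(\thresh - \ell_s) \geq \eta(\thresh - 1) \geq -\eta$, so each step lowers $\param$ by at most $\eta$, and therefore $\param_{s+j} \geq \param_s - j\eta$ for all $j \geq 0$. Fact~(ii): if $\param_s \leq \paramlow$ for the $K$ consecutive times $s$ in some window $W$, then Definition~\ref{def:eventually-safe} forces $\frac{1}{K}\sum_{s \in W}\ell_s \leq \thresh^\safe \leq \thresh$, so the cumulative drift $\sum_{s \in W}(\thresh - \ell_s)$ over that window is nonnegative; hence, telescoping the recursion across $W$, $\param$ suffers no net decrease over $W$.

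With these in hand, the base case is $t \leq K$: by Fact~(i) and $\param_1 \geq \paramlow - \eta$, $\param_t \geq \param_1 - (t-1)\eta \geq \paramlow - t\eta \geq \paramlow - K\eta$. For the inductive step, fix $t \geq K$, assume $\param_s \geq \paramlow - K\eta$ for all $s \leq t$, and look at the window $W = \{t-K+1, \dots, t\}$ of the $K$ most recent indices (all $\geq 1$ since $t \geq K$). I split into two cases. If $\param_s \leq \paramlow$ for every $s \in W$, then by Fact~(ii) the drift over $W$ is nonnegative, so $\param_{t+1} = \param_{t-K+1} + \eta\sum_{s\in W}(\thresh - \ell_s) \geq \param_{t-K+1} \geq \paramlow - K\eta$ by the inductive hypothesis. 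Otherwise, let $s_0$ be the \emph{largest} index in $W$ with $\param_{s_0} > \paramlow$; then $1 \leq t+1-s_0 \leq K$, and by Fact~(i), $\param_{t+1} \geq \param_{s_0} - (t+1-s_0)\eta > \paramlow - K\eta$. Either way $\param_{t+1} \geq \paramlow - K\eta$, closing the induction over $t \in [T+1]$.

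The one genuinely delicate point is Fact~(ii): Definition~\ref{def:eventually-safe} is written for the first $K$ time steps, whereas the argument needs the safety implication on every length-$K$ window $W$, so I would state explicitly (or read the definition as asserting) that the implication holds uniformly over all blocks of $K$ consecutive times — which is the intended meaning, since otherwise no finite-time bound for all $t$ could hold. The rest is routine bookkeeping: verifying the window indices lie in $[1,T]$, that "at most $\eta$ per step" is precisely the $\ell_s \leq 1$, $\thresh \geq 0$ bound, and that the strict inequality $\param_{s_0} > \paramlow$ survives the telescoping so the second case gives a strict margin. Plugging the resulting bound $\param_{t+1} \geq \paramlow - K\eta$ into the identity $\hat R_t(\param_{1:t}) = \thresh + (\param_1 - \param_{t+1})/(\eta t)$ established in the proof of Theorem~\ref{prop:CC} immediately yields the claimed risk bound for $t \in [K,\dots,T]$.
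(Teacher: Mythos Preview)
Your proof is correct and rests on the same two ingredients as the paper's: the per-step decrement bound $|\param_{s+1}-\param_s|\leq\eta$, and the observation that a window of $K$ consecutive indices with $\param_s\leq\paramlow$ has nonnegative net drift by Definition~\ref{def:eventually-safe} combined with $\thresh^\safe\leq\thresh$. The only structural difference is that the paper argues by contradiction (take the first time the bound is violated, show the preceding $K$ values are all below $\paramlow$, conclude $\param_t\geq\param_{t-K}$, contradiction), whereas you run a direct strong induction with a two-case split on whether the last $K$ values are all below $\paramlow$. These are dual presentations of the same argument; your induction is arguably a bit cleaner, and you also flag explicitly the point both proofs rely on, namely that Definition~\ref{def:eventually-safe} must be read as holding on every length-$K$ window, not just $[1,K]$.
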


\begin{proof}
    First note that the maximal change in the parameter is $\sup_{s \in\setT} |\param_{s+1} - \param_s| < \eta$, because $\ell_s \in [0,1]$ and $\thresh \in [0,1]$. We will then proceed by contradiction: Assume that $\inf_{s\in [T+1]} \param_s < \paramlow - K\eta$. Denote $t = \arg\min_{s\in[T+1]} \{s : \param_s < \paramlow - K\eta\}$.  That is, $t$ is the first instant when the parameter goes below that lower bound. Then, by definition of $t$, $\forall s<t, \param_t <\paramlow - K \eta \leq \lambda_s$.
    
    Because the max difference between successive steps is $\eta$, we can prove recursively that $\forall k \in \{0,\dots, K\}, \param_{t-k} < \paramlow - (K-k)\eta$. Note that, from those inequalities, we deduce that $t>K$ since $\param_1 \geq \paramlow - \eta$. By recursively applying the update rule $\param_t = \param_{t-K} + K\eta (\ep - \frac{1}{K}\sum_{k=1}^K \ell_{t-k})$, we have:
        \begin{align*}
        &\left(\forall k \in \{0,\dots, K-1\}, \; \param_{t-k} < \paramlow  \right) \\
        &\Longrightarrow \frac{1}{K} \sum_{k=1}^K \ell_{t-k} \leq \thresh^\safe \kern 6pc \text{(Definition \ref{def:eventually-safe})}\\
        &\Longrightarrow \param_t = \param_{t-K} + K\eta \left(\ep - \frac{1}{K}\sum_{k=1}^K \ell_{t-k}\right)  \\
        &\Longrightarrow \param_t \geq \param_{t-K} + K\eta(\thresh-\thresh^\safe) \\
        &\Longrightarrow \param_t \geq \param_{t-K}.
    \end{align*}
    
    Since $t$ is the first ever timestep to go below $\paramlow - K\eta$, this is a contradiction.
\end{proof}

\begin{remark}
    The assumption $\param_1 \geq \paramlow - \eta$ is not necessary to prove that $\hat R_t(\lambda_{1:t}) \leq \ep + O(1/t)$. Intuitively, two scenarios can occur:
\begin{enumerate}
    \item If $\forall k \in [K], \param_k \leq \paramlow$, then the empirical risk over the first $K$ steps will be upper bounded by $\thresh^\safe$. In this case, we need only to upper bound the risk between $K+1$ and $T$, which can be achieved using the previous theorem or this remark.
    \item If there exists a $k \in [K]$ such that $\param_k > \paramlow$, we can apply the previous theorem to upper bound the risk between $k$ and $T+1$. The cumulative loss between $1$ and $k$ is upper bounded by $k$, which is $o(1)$.
\end{enumerate}

\end{remark}

\subsection*{Conformal Decision Theory in Batch}
Conformal decision theory can also be applied in the so-called batch setting, wherein a separate calibration dataset is available for learning a safe decision.
Here, a dataset or simulator allows for offline experimentation to quantify the risk of different decisions, e.g., offline RL. This requires a different statistical setup. 
Consider the case of $n+1$ exchangeable decision functions $D_1(\param), \ldots, D_{n+1}(\param)$ and an associated loss function $\loss$ taking a decision and returning a value in $[0,1]$.
The first $n$ decision functions will be used for calibration of a parameter $\hat{\lambda}$ that will be used in the final decision.
These exchangeable decision functions may be produced, for example, by applying a single decision function to a sequence of exchangeable data points.
For the sake of simplicity, we assume that the decisions have monotone loss, i.e., that for all $i$, 
\begin{equation}
    \lambda_1 \leq \lambda_2 \implies \loss\left(D_i(\lambda_1)\right) \leq \loss\left(D_i(\lambda_2)\right).
\end{equation}
Following~\cite{angelopoulos2022conformal}, the conformal control variable can be chosen as
\begin{equation}
    \hat{\lambda} = \sup\left\{ \lambda : \frac{1}{n}\sum\limits_{i=1}^n \loss\left(D_i(\lambda)\right) \leq \epsilon - \frac{1-\epsilon}{n} \right\}.
\end{equation}
This will give a risk guarantee as a corollary of Theorem 1 of~\cite{angelopoulos2022conformal}.
\begin{corollary}
With the choice of $\hat{\lambda}$ above,
\begin{equation}
    \E\left[ \loss\left(D_{n+1}(\hat{\lambda})\right) \right] \leq \epsilon.
\end{equation}
\end{corollary}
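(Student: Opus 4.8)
The plan is to recognize the corollary as an instance of the conformal risk control guarantee (Theorem~1 of~\cite{angelopoulos2022conformal}) after a sign change, as anticipated by the text, and then---since the argument is short---to record the self-contained version. Write $R_i(\lambda) := \loss(D_i(\lambda)) \in [0,1]$. By the monotone-loss hypothesis each $R_i$ is non-decreasing in $\lambda$, hence $\mu \mapsto R_i(-\mu)$ is non-increasing and $[0,1]$-valued. Since $\epsilon - \frac{1-\epsilon}{n} = \frac{(n+1)\epsilon - 1}{n}$, a direct rewriting shows that $\hat{\lambda}$ is exactly the conformal-risk-control threshold at level $\epsilon$ for the non-increasing losses $\mu \mapsto R_i(-\mu)$ with upper bound $B=1$, computed on the first $n$ calibration functions. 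Because $D_1,\dots,D_{n+1}$ are exchangeable, so are the loss processes $R_1(\cdot),\dots,R_{n+1}(\cdot)$, and Theorem~1 of~\cite{angelopoulos2022conformal} then applies verbatim, giving $\E\!\left[ \loss\left(D_{n+1}(\hat{\lambda})\right)\right] \le \epsilon$.

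For the self-contained route, I would introduce the oracle threshold computed from all $n+1$ functions, $\lambda^\ast := \sup\left\{ \lambda \in \R : \frac{1}{n+1}\sum_{i=1}^{n+1} R_i(\lambda) \le \epsilon \right\}$, and carry out three steps. First, show $\hat{\lambda} \le \lambda^\ast$: at $\lambda = \hat{\lambda}$ the defining inequality gives $\frac{1}{n+1}\sum_{i=1}^{n} R_i(\hat{\lambda}) \le \epsilon - \frac{1}{n+1}$, and adding $R_{n+1}(\hat{\lambda})/(n+1) \le 1/(n+1)$ yields $\frac{1}{n+1}\sum_{i=1}^{n+1} R_i(\hat{\lambda}) \le \epsilon$, so $\hat{\lambda}$ lies in the down-closed set defining $\lambda^\ast$. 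Second, use monotonicity of $R_{n+1}$ to conclude $R_{n+1}(\hat{\lambda}) \le R_{n+1}(\lambda^\ast)$. Third, observe that $\lambda^\ast$ is a symmetric function of the multiset $\{R_1,\dots,R_{n+1}\}$, so by exchangeability $\E\!\left[R_{n+1}(\lambda^\ast)\right] = \frac{1}{n+1}\sum_{i=1}^{n+1}\E\!\left[R_i(\lambda^\ast)\right] = \E\!\left[\frac{1}{n+1}\sum_{i=1}^{n+1} R_i(\lambda^\ast)\right] \le \epsilon$, the last inequality because the constraint holds at $\lambda^\ast$. Chaining the three steps gives the claim.

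The step I expect to require the most care is the handling of the supremum and the continuity of $\lambda \mapsto \frac{1}{m}\sum_i R_i(\lambda)$: to assert that the constraint holds \emph{at} $\hat{\lambda}$ and at $\lambda^\ast$ (not merely in a limit) one needs left-continuity of each $R_i$ in $\lambda$ (equivalently, right-continuity in the flipped variable $\mu$, which is precisely the regularity assumed in~\cite{angelopoulos2022conformal}), together with conventions for the degenerate cases where the defining set is empty or unbounded. One should also be explicit that ``exchangeable decision functions'' is meant in the sense that the processes $(R_i(\cdot))_{i=1}^{n+1}$ are exchangeable, so that $\lambda^\ast$ and the random vector $(R_1(\lambda^\ast),\dots,R_{n+1}(\lambda^\ast))$ inherit the required symmetry; the rest is routine arithmetic.
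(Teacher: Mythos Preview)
Your proposal is correct and matches the paper's approach: the paper does not give a standalone proof but simply invokes Theorem~1 of~\cite{angelopoulos2022conformal}, which is exactly your first paragraph. Your additional self-contained argument is the standard three-step proof of that theorem (add the worst-case $(n{+}1)$st loss, use monotonicity to compare with the symmetric oracle threshold, then exchangeability), and the caveats you flag about left-continuity at the supremum and degenerate cases are the right ones.
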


Though the validity of the algorithm follows from the theory of conformal risk control, it is substantially different in practice and deserves further study.
Specifically, unlike the previous methods, in order to calculate $\hat{\lambda}$, one must iterate through a sequence of counterfactual decisions (possible values of $\lambda$) and evaluate what the loss would have been.
This restricts the applications of the batch algorithm and also presents an opportunity for future work to make it more efficient and expand its scope.
\section{Experiments}
\label{sec:experiments}

We demonstrate Conformal Decision Theory in three autonomous decision-making domains, which exhibit three different ways in which a conformal controller can be instantiated.  
First, we consider a robot-navigation-around- humans example in the Stanford Drone Dataset \cite{robicquet2016learning}, where CDT tunes the robot's reward function in an online manner to be safe but efficient. 
Next, we model a manufacturing setting where CDT directly calibrates the speed of the conveyor belt under a robot to achieve high-throughput and successful robot grasps. 
Finally, we study an automated high-frequency trading example where CDT must optimize the buying and selling of stocks.

\subsection{Robot Navigation in Stanford Drone Dataset}
\label{subsec:sdd}

Robot navigation around people must balance safety (i.e., not colliding with humans) and efficiency (i.e., the robot makes progress towards a goal). To ensure that the risk of collision is low while still making progress to the goal, the robot will calibrate its cost function at run-time using a conformal controller (CC).
%

{\renewcommand{\arraystretch}{1.25}
\begin{table*}[t!]
    \centering
    \caption{\textbf{Stanford Drone Dataset: Quantitative Results.}  Results on the \texttt{nexus\_4} scenario from SDD \cite{robicquet2016learning}. The robot's goal is to cross the nexus while avoiding pedestrians. Safety was violated if the robot collided with a human. At all learning rates $\eta$, the conformal controller is more efficient at navigation than ACI in terms of time. It remains safe so long as the learning rate is set high enough so that the robot planner can quickly adapt to nearby humans; when the learning rate is set too low (near zero), proximity to humans is effectively not penalized, leading to collisions.} \label{tab:sdd}
    \begin{tabular}{|l|l|rrlllllll|}
        \rowcolor{Gray} \hline 
        \multicolumn{2}{|c|}{} & \multicolumn{9}{|c|}{\textbf{Metrics}} \\ \hline 
        \rowcolor{LightGray}
        \textbf{Method} & $\eta$ & success & time (s) & safe & min dist (m) & avg dist (m) & 5\% dist (m) & 10\% dist (m) & 25\% dist (m) & 50\% dist (m) \\ \hline 
        Aggressive & n/a & \cmark & 8.567 & \xmark & 0.1595 & 4.058 & 1.253 & 1.546 & 2.495 & 4.021 \\
        \hline
        \multirow[c]{3}{*}{\makecell{ACI \\($\alpha=0.01$)}} & 0 & \cmark & 27.17 & \xmark & 0.07612 & 5.201 & 1.842 & 2.415 & 3.9 & 5.614 \\
         & 0.01 & \cmark & 26.67 & \xmark & 0.8026 & 4.575 & 2.261 & 3.014 & 3.507 & 4.574 \\
         & 0.1 & \cmark & 24.73 & \xmark & 0.7906 & 4.771 & 2.284 & 2.825 & 3.561 & 4.78 \\
        \hline
        \multirow[c]{4}{*}{\makecell{Conformal \\Controller \\($\thresh=2$m)}} & 50 & \cmark & 20.03 & \xmark & 0.6122 & 3.299 & 0.8688 & 1.426 & 2.022 & 2.978 \\
         & 100 & \cmark & 17.4 & \cmark & 1.142 & 3.794 & 1.678 & 1.811 & 2.378 & 3.262 \\
         & 500 & \cmark & 17.33 & \cmark & 1.116 & 3.989 & 1.69 & 1.812 & 2.452 & 3.795 \\
         & 1000 & \cmark & 16.17 & \cmark & 1.265 & 3.599 & 1.698 & 1.81 & 2.282 & 3.303 \\
        \hline
        Conservative & n/a & \xmark & $\infty$ & \cmark & 2.268 & 6.291 & 3.801 & 3.982 & 4.982 & 5.993 \\
                \hline 
    \end{tabular}
\end{table*}}


\begin{figure*}[t!]
    \centering
    \includegraphics[width=\textwidth]{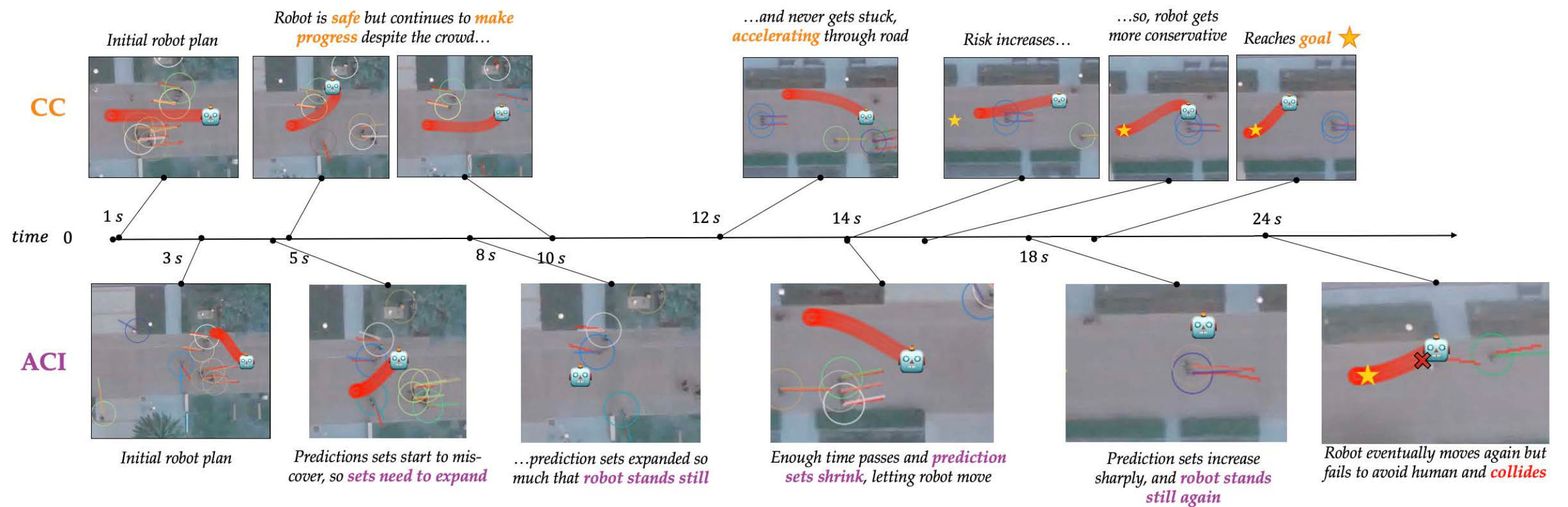}
    \caption{\textbf{Stanford Drone Dataset: Qualitative Results.} Visualization of interaction over time (left to right). (Top) With our conformal controller (CC), the robot always makes progress towards its goal while remaining safe, even when blocked by crowds of people. (Bottom) The ACI baseline calibrates the prediction sets. As soon as a mis-prediction happens, ACI expands the prediction sets to obtain coverage, but this frequently blocks the robot from moving anywhere (see $t=10 s$), even though the mis-predictions occurred for a pedestrian who was far away and not interfering with the robot's plan.}
    \label{fig:sdd_qualitative}
    \vspace{-1em}
\end{figure*}

\para{Decision Function \& Parameterization} 
The robot plans via model predictive control, where at each timestep it fits a minimum-cost spline subject to its dynamic constraints, which are modeled as a nonlinear Dubins car \cite{walambe2016optimal}. Let $g := [g_x, g_y] \in \mathbb{R}^2$ be the robot's goal location. 
Let $t$ be the current time, $H < T$ be the planning horizon, and $\action_{t:t+H} \in \mathbb{R}^{H \times 3}$ be a spline consisting of the robot's planar position and orientation.
The robot also gets as input the current set of short-horizon predictions of each human's state, $\state_{t:t+H} \in \mathcal{P}_t$, generated by an autoregressive predictive model \cite{hyndman2018forecasting}. Note that this set $\mathcal{P}_t$ can include predictions for \textit{multiple} humans in the scene (as shown in Figure~\ref{fig:front_fig}). 
The robot's planning objective is 
\begin{equation}
    \reward(\action_{t:t+H}; \mathcal{P}_t, \param) := \mathlarger{\sum}\limits_{\tau=t}^{t+H} \underbrace{\vphantom{\inf_{\substack{x_\tau \in \mathcal{P}_t }}}\| \action_\tau^{\mathrm{pos}} - g \|}_{\text{Goal distance}} + \param \cdot \underbrace{(-\inf_{\substack{x_\tau \in \mathcal{P}_t }} \| \action_\tau^{\mathrm{pos}} - \state_\tau \|)}_{\text{Human avoidance}},
\end{equation}
where the notation $\action_\tau^{\mathrm{pos}} \in \mathbb{R}^2$ indicates the $xy$-positional entries of the robot's state at time $\tau$. Note that the conformal control variable $\param$ scales the cost of staying far away from predicted human states: if $\param = 0$ the robot only cares about reaching the goal; if $\param > 0$ then the robot is increasingly penalized for intersecting with predicted human trajectories.
The decision function outputs the minimum-cost trajectory for the robot
\begin{equation}
    \dparam_t := \arg \min_{\action_{t:t+H} \in \actionspace} \reward(\action_{t:t+H}; \mathcal{P}_t, \param),
\end{equation}
where $\actionspace$ is the set of feasible splines (ones that are dynamically feasible for the robot and also do not intersect with environment obstacles). At the next timestep, the robot re-predicts the human trajectory (i.e., generates $\mathcal{P}_{t+1}$) and re-plans the decision $\dparam_{t+1}$.

\para{Loss Function} 
Let $\targetspace \subset \mathbb{R}^2$ and the targets $\target_t^1,\dots, \target_t^M \in \targetspace$ be the actual $xy$ positions of each of the $M$ humans that the robot observes at time $t$. The loss function is defined as the negative distance to the nearest human,
\begin{equation}
    \loss := -\inf_{i \in [M]} \| \target^i_t - \action^{\mathrm{pos}}_t \|_2,
\end{equation}
where $\action^{\mathrm{pos}}_t$ is the robot's current position. To make this value bounded, we clip the loss to the size of the video. Note that because  we use a negative loss, we also changed $\param$ so that the larger $\param$, the more conservative the decision.

\para{Metrics} 
We measure a boolean \textit{safe} variable indicating if the robot did not ever collide with a human. 
We also measure a boolean \textit{success} variable if the robot reached the goal location by the end of the interaction episode (i.e., length of video in the dataset).
We also measure the time to reach the goal location and the minimum, mean, and $\{5\%, 10\%, 25\%, 50\%\}$ quantiles of the distance to the nearest human.

\para{Experimental Setup} All methods are evaluated on interactions from the \texttt{nexus\_4} video in the Stanford Drone Dataset (SDD) \cite{robicquet2016learning}. The risk threshold is $\thresh=2 m$ (i.e., radius around human). The robot always starts from the same initial condition and moves to the same goal. This scenario has a high density of pedestrians, making the risk-performance tradeoff for the robot nontrivial. Our approach (\textbf{CC}) adapts the reward weight $\param_t$ on the human collision cost based on \autoref{eq:CC} so that the decision risk is calibrated. Our baseline robot planners: \textbf{conservative} which always uses the safe decision function $D_t^{\param = 1}$, \textbf{aggressive} which uses $D_t^{\param = 0}$, and \textbf{ACI} \cite{dixit2023adaptive} which first uses adaptive conformal prediction to calibrate prediction sets and then plans to avoid these sets.

\begin{figure}[t!]
    \centering
    \includegraphics[width=0.5\textwidth]{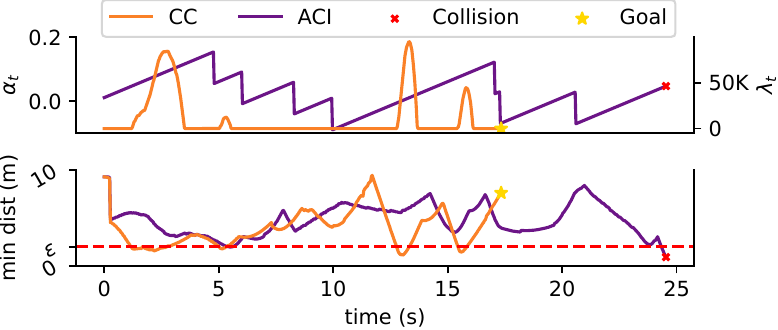}
    \caption{\textbf{Stanford Drone Dataset.} (Top) Trajectories of $\param_t$ (calibrated by \textbf{CC}) and $\alpha_t$ (from \textbf{ACI} to calibrate sets). When $\alpha_t \leq 0$, ACI returns infinite set and the robot stops. (Bottom) Distance to the nearest human over time. $\param_t$ is large when the robot is close to human, while $\alpha_t$ is unrelated. The $\param_t$ trajectory is shorter because it reaches the goal faster.}
    \label{fig:sdd_lineplots}
    \vspace{-1.5em}
\end{figure}

\para{Results} 
Quantitative results shown in Table~\ref{tab:sdd} and qualitative results in \autoref{fig:front_fig}.
Because the \textbf{conformal controller} calibrates the robot's decisions directly, it is substantially ($\sim 29 \%$) faster at reaching the goal than the \textbf{ACI} algorithm (see visualization over time in \autoref{fig:sdd_qualitative}). 
While the \textbf{aggressive} baseline reaches the goal fastest, it consistently violates the safety threshold. On the other hand, the \textbf{conservative} baseline never completes the task, getting stuck far away from the crowds of pedestrians. 
The \textbf{conformal controller} ensures safety so long as the learning rate is fast enough for the robot planner to quickly adapt to changes in nearby human behavior (see \autoref{fig:sdd_lineplots}). Note that \textbf{ACI} can result in collisions for two reasons: 1) the prediction sets do not adapt fast enough for the spline planner to react and swerve out of the way of the pedestrian, 2) if the prediction sets become so large that there is no feasible spline and the robot must stand in
place, the pedestrians sometimes run into the robot.
This issue was independently observed in~\cite{dixit2023adaptive}.

\subsection{Manufacturing Assembly Line Robot}
\label{subsec:factory}

\begin{figure*}[t!]
    \centering
    \includegraphics[width=0.85\textwidth,valign=t]{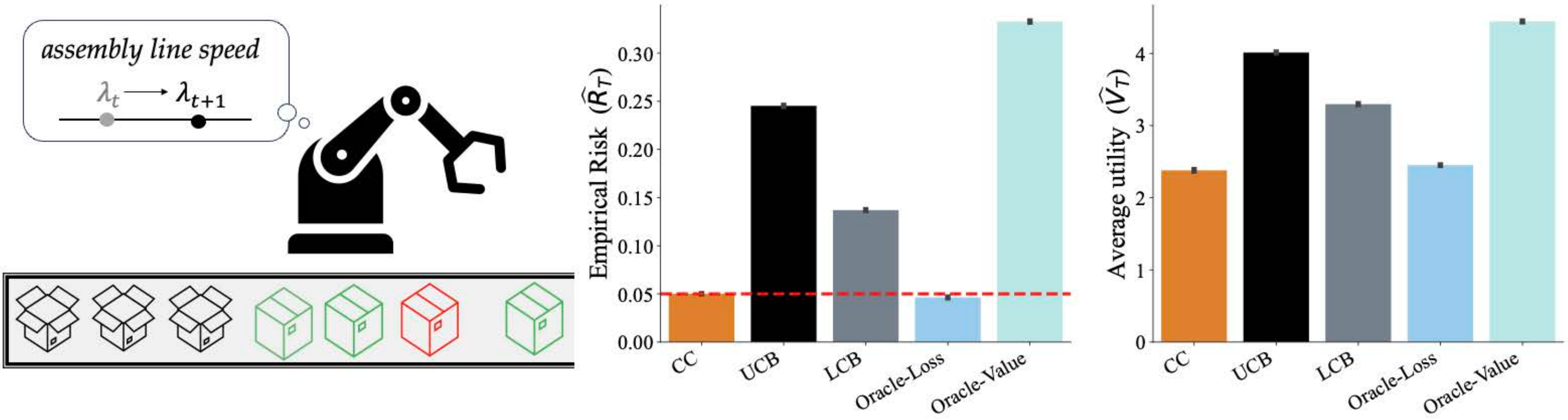}
    \caption{\textbf{Manufacturing Assembly Line Robot: Quantitative Results.} (Left) Illustrative example: Robot must adjust the speed so that it grasps the most items while minimizing grasp failure. (Right) Empirical risk ,$\hat{R}_T$, and average utility (i.e., successful grasps), $\hat{V}_T$ on 1000 runs. Our method is denoted by (CC). Dashed red line is target risk $\thresh=0.05$.}
    \label{fig:factory}
    \vspace{-1em}
\end{figure*}

Consider a factory assembly line where a robot has to grab items from a conveyor belt (left, Figure~\ref{fig:factory}). 
As the speed increases, the throughput of items increases but so does the ratio of robot grasp failures. The agent must calibrate the speed so that the ratio of failures over time stays below $\thresh$. 

\para{Decision Function \& Parameterization} 
The agent directly modifies the speed, thus the action is defined as $\action_t := \param_t$. Here we take $\param_t\in[0, 1]$.

\para{Risk Function} For a given conveyor belt speed $\param$, the robot will attempt to grab $n(\param)$ items, among which $d(\param)$ are failed grasps. The loss received by the robot will be $\loss(\param) := d(\param) / n(\param)$. 

\para{Metrics} We measure average utility (i.e., $\#$ of successful grasps), $\hat{V}_T := \frac{1}{T}\sum_{t=1}^T V(\lambda_t)$, and empirical risk, $\hat{R}_T(\param_{1:T})$. 

\para{Experimental Setup} We assume that the number of items $n(\param)$ the robot attempts to grab is drawn as $\text{Pois}(C\cdot \sqrt{\param})$. The number of failed grasps conditioned on the total number of items is $d(\param)|n \sim \text{Bin}(n, C' \cdot \param)$. 
Importantly, the distributions of $n, d$, and the parameters $C, C'$ are all \emph{unknown to the agent}. 
Our conformal controller method (\textbf{CC}) adjusts the speed $\param_t$ based on the update rule from \autoref{eq:CC}.
In addition to the risk function, we also track a utility function which is the number of successful grasps $V(\param) := n(\param) - d(\param)$.

We compare our method with two baselines: A bandit algorithm running the upper confidence bound algorithm (\textbf{UCB})
\cite{lattimore_szepesvari_2020} to maximize the utility $V$ and another algorithm running the lower confidence bound algorithm (\textbf{LCB}) to minimize the loss $\loss$. 
We also add two methods with oracle access to the otherwise unknown parameters: \textbf{Oracle-Value} selects the best speed to maximize grasp success $\param^{*}_V := \arg\max_\param \E[V(\param)]$ and \textbf{Oracle-Loss} selects the best speed $\param^*_\mathcal{L}$ such that $\E[\loss(\param^*_\mathcal{L})] := \ep$. The values selected for the parameters are in \autoref{fig:factory}. 
We run all methods for a horizon $T=2000$, set $C=10$, $C'=0.2$, and the target risk is $\thresh = 0.05$ (i.e., $\leq 5\%$ failed grasp). 

\para{Results} We run the simulation $N=1000$ times, and calculate the average empirical risk and the average number of successful grasps. In \autoref{fig:factory}, we find that our method performs as well as the \textbf{Oracle-Loss}, ensuring that the empirical risk of grasps never exceeds $\thresh=0.05$, while still ensuring high throughput of successfully grasped items. \textbf{UCB} and \textbf{LCB} both violate the empirical risk threshold: UCB incurs this risk but achieves a higher number of successful grasps, while LCB is slow to learn its target, resulting in a higher risk over the time horizon.

\subsection{Stock Trading Agent}

\begin{figure}[t!]
    \centering
    \includegraphics[width=\columnwidth,valign=t]{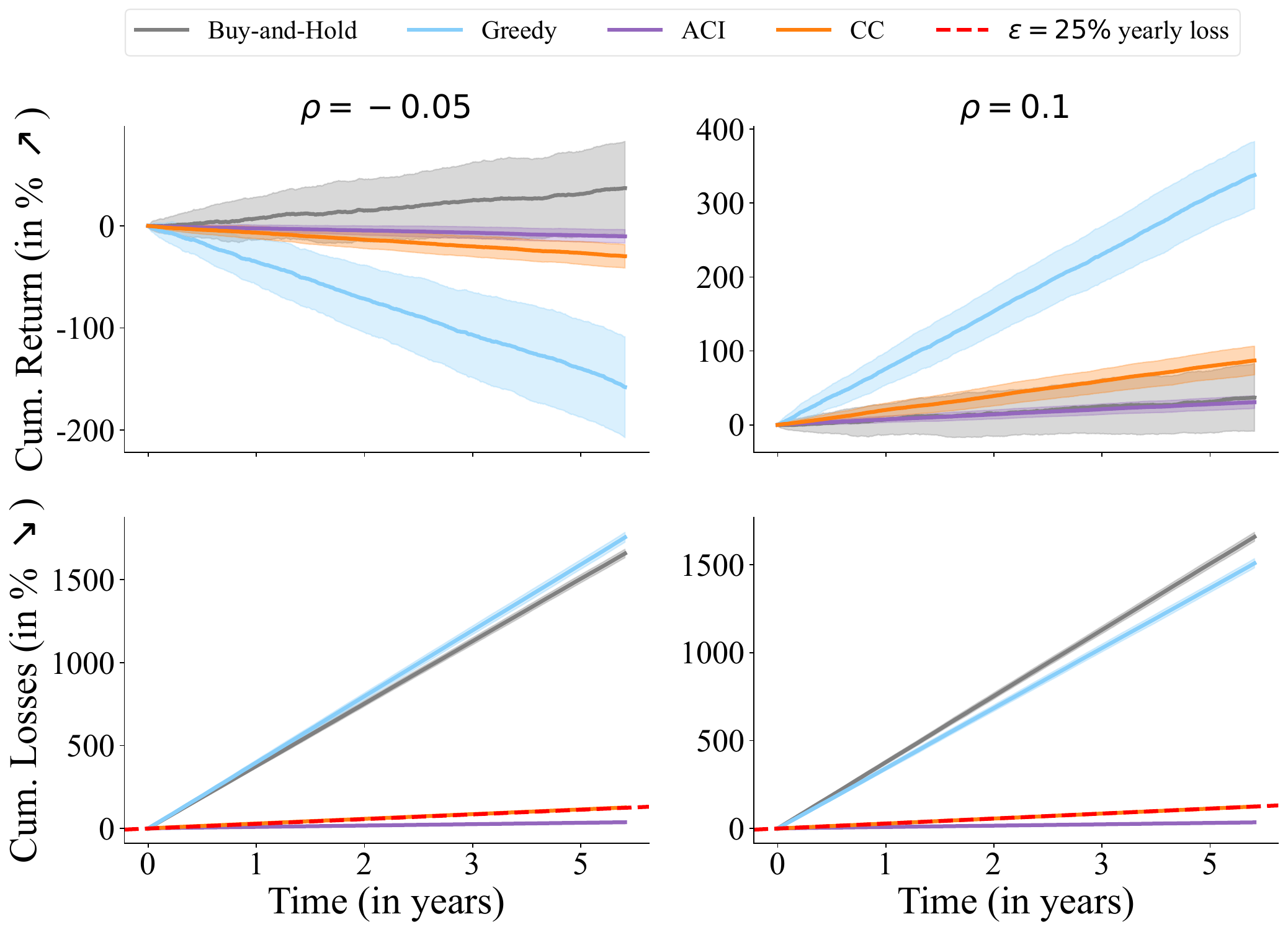}
    \caption{\textbf{Stock Trading: Quantitative Results.} All results over 5 year period. The yearly loss threshold $\thresh=25\%$. (left) Despite a poor prediction model of return (negative correlation), the CC achieves bounded loss at the user's threshold (bottom, dashed red line overlaps with orange CC line) but is not the best at keeping the return the highest. (right) With a strong prediction model on the return (positive correlation), the CC is able to achieve high yearly returns (second only to Greedy) while simultaneously respecting the loss threshold (which the Greedy violates).}
    \label{fig:trading}
    \vspace{-1em}
\end{figure}

We consider an automated trading agent that trades a stock at high frequency. We model the agent as able to either buy or short-sell the stock, with no trading cost. When buying the stock at time $t$, the agent receives return $r_t$. When short-selling the stock, the agent receives a return $-r_t$. The agent must calibrate its trading decisions so the annualized loss is at or beneath the investor's loss threshold of $\thresh \%$. 

\para{Decision Function \& Parameterization} 
At every timestep, $t$, the agent has access to the past history of returns and its own actions. The agent can use it to construct a confidence set $\hat C_\param$ where $\param$ is the conformal control variable. Given a predicted set, the agent can decide to either buy if the entire set is above zero, short-sell if the entire set is below zero, and not do anything if zero is in the set:
\begin{equation}
    \dparam_t := 
    \begin{cases}
        1 & {\rm if } \min(\hat C_\param) > 0 \\
        -1 & {\rm if } \max(\hat C_\param) < 0 \\
        0 & {\rm o.w.}
    \end{cases}
\end{equation}

\para{Risk Function}  The agent's action is $\action\in\{-1, 0, 1\}$ which incurs a loss $\loss(\action, r) := -\action \cdot r \cdot 1\{\action \cdot r <0\}$, i.e., the agent suffers a loss equal to the amount of money lost by that decision. We clip the loss to make it bounded. 

\para{Experimental Setup} We simulate stock returns using a geometric Brownian motion. We assume that we observe returns every hour, so we have $n=252(\mathrm{days}) \times 7 (\text{hours per day})$ steps per year:
\begin{equation}
    r_t := \mu \Delta + \sigma \sqrt{\Delta} Z_t \quad \text{where}\quad \Delta=1/n.
\end{equation}
We assume that at time $t-1$, the agent has access to a prediction $\hat r_t$ and we assume that the correlation $\text{corr}(r_t, \hat r_t) := \rho$. The higher $\rho$ the better the predicted returns $\hat r_t$. The predicted interval is
\[
    \hat C_\param(\hat r_t) := [\hat r_t - \sigma \sqrt{\Delta} z_{\param/2}, \hat r_t + \sigma \sqrt{\Delta} z_{1-\param/2}],
\]
where $z_\param$ is the quantile of level $\param$ of the normal distribution. 

\para{Metrics} In addition to the loss, we also measure return $V(\action,r) := \action \cdot r$ when the agent's action is $\action$.

\para{Results} We run $N=100$ simulations over five years. We set $\mu=0.08, \sigma=0.2$,
which are approximately the historical values for the S\&P 500. 
We compare our \textbf{CC} method with: the \textbf{Buy-and-Hold} strategy that simply buys the stock at each timestep, the \textbf{Greedy} strategy that buys the stock whenever the prediction is above zero and short-sells it when the prediction is below zero (equivalent to $D(\param=1)$), and \textbf{ACI} that adjusts $\param$ online using the ACI algorithm.
We set the target coverage for ACI at $90\%$ and our annualized loss threshold to be less than $\thresh = 25\%$ (the threshold per time-step is therefore $\thresh/n$). For the prediction of returns, we simulate another geometric Brownian motion,
\begin{equation}
    \hat r_t := \mu \Delta + \sigma \sqrt{\Delta} W_t \quad \text{where}\quad \text{where corr}(W_t, Z_t) = \rho.
\end{equation}
The results for the different methods are in Figure~\ref{fig:trading}. 
We plot the cumulative return and cumulative loss for all methods and for two models: $\rho=0.1$ (good model) and $\rho=-0.05$ (bad model). 
In both cases, our \textbf{CC} quickly adapts the parameter to stay below the loss threshold, while having good returns when the predictive model is good ($\rho=0.1$). 
The \textbf{Greedy} approach has more extreme returns (negative when the model is bad, positive when the model is good) with a high level of loss. 
\textbf{ACI} is highly conservative, resulting in smaller loss, significantly below the threshold. 
By being so conservative, the algorithm limits its potential gain when the predictive model is actually good. \textbf{Buy-and-hold} also has high cumulative loss as it moves with the stock, and has a more consistent return, as it is independent of the model quality.

\section{Discussion \& Conclusion}
\label{sec:discussion}
In this paper, we introduce \textit{Conformal Decision Theory}, a theoretical and algorithmic framework for producing safe  decisions despite being based on imperfect machine-learning predictions. 
We have described our method in both the online adversarial setting, and also the batch exchangeable setting.
The main difference between the two is that the online algorithms we present are computationally trivial, while the batch setting can require evaluating a large amount of \emph{counterfactual} decisions (indexed by different choices of $\lambda$) on every calibration point.
Though this can be done with binary search, it still presents operational challenges.
One path for future work may be to test the method in settings where simulators or data sets can support this form of offline policy evaluation.
Another may be to develop formally valid approximations of the batch technique which preserve risk control while being more practical.
Furthermore, extensions of the batch technique to non-exchangeable settings are readily available, e.g., by use of the techniques in~\cite{farinhas2023non}, and could be evaluated.

Finally, despite $\lambda$ being 1-dimensional, our procedure can index an arbitrary set of decisions. 
Consider a set of decisions $\mathcal{D}$, a utility predictor $\hat{u}(d ; x)$ where $d \in \mathcal{D}$, and a loss predictor $\hat{\mathcal{L}}(d ; x)$, we can maximize utility subject to the constraint that our predicted loss is controlled: 
\begin{align*}
    D_t = \arg\max_{d \in \mathcal{D}} \quad &  \hat{u}(d ; x_t) \\
    \textrm{s.t.} \quad & \hat{\mathcal{L}}(d ; x_t) \leq \lambda_t.
\end{align*}
This will work as long as we revert to a safe decision if $\lambda_t \leq \lambda^{\rm safe}$; where the sequence $\lambda_t$ is defined in \autoref{eq:CC}.
However, no guarantees on utility are provided.
This topic would be a great avenue for future work, bringing conformal prediction closer to the classical statistical decision theory of Lehmann~\cite{lehmann2011some}, von Neuman and Morgenstern~\cite{von2007theory}, and others.


\bibliographystyle{IEEEtran}
\bibliography{biblio}
\end{document}